\newtheorem{problem}{Problem}
\newtheorem{proposition}{Proposition}[section]
\theoremstyle{definition}
\theoremstyle{remark}
\newtheorem*{remark}{Remark}
\newcommand{\changed}[1]{\textcolor{black}{#1}}
\newcommand{\mpp}{\textsc{MPP}\xspace}
\newcommand{\mcr}{\textsc{MCR}\xspace}
\newcommand{\mmcr}{\textsc{MMCR}\xspace}
\newcommand{\qcop}{\textsc{QCOP}\xspace}
\newcommand{\otp}{\textsc{OTP}\xspace}
\newcommand{\rcp}{\textsc{RCP}\xspace}
\def\subsubsection{\@startsection{subsubsection}
                                 {3}
                                 {\z@ \hspace*{1mm}}
                                 {0ex plus 0.1ex minus 0.1ex}
                                 {0ex}
                                 {\normalfont\normalsize\itshape}}
\title{
Integer Programming as a General Solution Methodology for Path-Based Optimization in Robotics: Principles, Best Practices, and Applications
}
\author{
Shuai D. Han \quad Jingjin Yu
\thanks{
S. D. Han and J. Yu are with the Department of Computer Science, 
Rutgers, the State University of New Jersey, Piscataway, NJ, USA. E-Mails: 
\{{\tt shuai.han, jingjin.yu}\}\hspace*{.25em}\MVAt \hspace*{.25em}rutgers.edu. 
}%
}
\begin{document}
\maketitle
\thispagestyle{empty}
\pagestyle{empty}

\begin{abstract}
Integer programming (IP) has proven to be highly effective in solving many 
path-based optimization problems in robotics. However, the applications of
IP are generally done in an ad-hoc, problem specific manner. In this work, 
after examined a wide range of path-based optimization problems, we describe 
an IP solution methodology for these problems that is both easy to apply (in 
two simple steps) and high-performance in terms of the computation time and 
the achieved optimality. 
We demonstrate the generality of our approach through the application to 
three challenging path-based optimization problems: {\em multi-robot path 
planning} (MPP), {\em minimum constraint removal} (MCR), and {\em reward 
collection problems} (RCPs). 
Associated experiments show that the approach can efficiently produce 
(near-)optimal solutions for problems with large state spaces, complex 
constraints, and complicated objective functions.
In conjunction with the proposition of the IP methodology, we introduce 
two new and practical robotics problems: {\em multi-robot minimum 
constraint removal} (MMCR) and {\em multi-robot path planning} (MPP) with 
{\em partial solutions}, which can be quickly and effectively solved using 
our proposed IP solution pipeline. 
\end{abstract}

\section{introduction}\label{sec:introduction}
The study of robot task and motion planning problems aims at finding a 
path (resp., paths) for the robot (resp., robots) to optimize certain 
cumulative cost or reward. While some settings admit efficient 
search-based algorithmic solutions, e.g., via dynamic programming, such 
problems are frequently computationally intractable 
\cite{hauser2014minimum,Yu2015IntractabilityPlanar}. In such cases, two 
approaches are often employed: {\em (i)} designing polynomial-time 
algorithms that compute approximately optimal solutions, and {\em (ii)} 
applying greedy search, assisted with heuristics. Both approaches have 
their fair share of drawbacks in practical applications: the former does 
not always ensure good optimality and the later often does not scale well 
as the problem instance becomes larger. 

In this paper, we describe an {\em integer programming} (IP) methodology 
as a third general solution approach toward challenging path-based 
optimization problems. The key to building an IP-based solution is the 
construction of a {\em model} constituting of variables and 
inequalities that encodes all constraints of the target problem. For 
optimizing over paths, we make the important observation that it is 
natural to partition the model construction process into a {\em 
path-encoding} step followed by a second step that adds the {\em 
optimization constraints}. Following this methodology, we can readily 
solve many distinct and challenging path-based optimization problems 
including multi-robot path planning (MPP), minimum constraint removal 
(MCR), and reward collection problems (RCPs). As shown with extensive 
evaluation, the IP approaches often come with competitive performance 
in terms of both computation time and solution optimality. 
In conjunction with the proposition of the IP methodology, we introduce 
two new robotics problems: {\em multi-robot minimum constraint removal} 
(MMCR) and {\em multi-robot path planning} (MPP) with {\em partial 
solutions}. These problems are natural generalizations of MCR and MPP, 
respectively, that are practical but can be more challenging 
computationally. 

\noindent\textbf{Related Work}.
Integer programming (IP) methods are widely used to tackle 
combinatorial optimization challenges since their 
inception~\cite{dantzig2016linear,nemhauser1988integer}, with 
applications to a variety of problems spanning the traveling 
salesperson problem (TSP)~\cite{miller1960integer}, network 
flow~\cite{hu1969integer}, multi-target 
tracking~\cite{morefield1977application}, etc. More recent 
studies have applied IP on path optimization problems in robotics 
including multi-robot path planning \cite{schouwenaars2001mixed,
peng2005coordinating,YuLav16TRO} and robotic 
manipulation~\cite{ding2011mixed,han2017complexity}, to list a few. 

This work is motivated by and builds on a long line of work that 
used IP, starting with the surprising initial success as IP was 
applied to multi-robot path planning (MPP) \cite{YuLav16TRO}, which 
achieved a leap in performance in optimally solving MPP. MPP is an 
important problem that finds applications in a diverse array of 
areas including evacuation~\cite{RodAma10}, formation~\cite{PodSuk04,
SmiEgeHow08}, localization~\cite{FoxBurKruThr00}, microdroplet 
manipulation \cite{GriAke05}, object transportation~\cite{RusDonJen95}, 
search and rescue~\cite{JenWheEva97}, and human robot 
interaction~\cite{knepper2012pedestrian}. In the past decade, 
significant progress has been made on optimally solving MPP problems 
in discrete, graph-base environments. Algorithmic solutions for such 
a discrete MPP are often through reduction to other problems 
\cite{Sur12,erdem2013general,YuLav16TRO}. Decoupling-based heuristics 
are also proven to be useful~\cite{StaKor11,wagner2015subdimensional,
boyarski2015icbs}. Similar to the partial solution aspect examined in 
this paper, a recent work~\cite{MaIJCAI18} provides a search-based 
solver which optimizes the number of robots that reach goals in a 
limited time horizon.

To demonstrate the generality and ease of application of our 
methodology, we also examined the minimum constraint removal (MCR) 
and a class of robotic reward collection problems (RCPs). MCR, which 
requires finding a path while removing the least number of blocking 
obstacles, is relevant to constraint-based task and motion 
planning~\cite{lozano2014constraint,dantam2018incremental}, object 
rearrangement \cite{krontiris2016efficiently,han2017complexity}, and 
control strategy design~\cite{castro2013incremental}. Two search-based 
solvers are provided in~\cite{hauser2014minimum} that extend to 
weighted obstacles~\cite{garrett2015ffrob}. Methods exist that balance 
between optimality, path length and computation 
time~\cite{krontiris2015computational}. Recent studies on MCR reduce 
the gap between lower and upper bounds regarding 
optimality~\cite{eiben2018improved}. Reward collection problems (RCPs) 
are generally concerned with gathering rewards without exceeding some 
(e.g., time or distance) budget. There is a variety of such problems 
including the classical traveling salesperson problem 
(TSP)~\cite{lawler1985traveling} and the orienteering problem 
(OP)~\cite{vansteenwegen2011orienteering}. Our focus here is geared 
toward the more complex variations~\cite{yu2016correlated,yu2014optimal} 
involving non-additive optimization objectives. Such problems model 
challenging information gathering tasks, e.g., precision 
agriculture~\cite{tokekar2016sensor}, monitoring environmental 
attributes of the ocean~\cite{ma2016information}, and infrastructure 
inspection~\cite{papachristos2016distributed}. These problems are 
generally at least NP-hard \cite{GarJoh79,vansteenwegen2011orienteering,
hauser2014minimum}.


\noindent\changed{
\textbf{Contributions}. This study brings two main contributions: 
}
\vspace*{-1mm}
\begin{itemize}[leftmargin=.15in]
\item Building on previous studies, we propose a general integer 
programming (IP) solution framework for path-based optimization 
problems in robotics. The two-step pipeline of the framework is 
easy to apply and also frequently produces highly optimal solutions.
\item We formulate the multi-robot minimum constraint removal (MMCR) 
problem and the multi-robot path planning (MPP) with partial 
solutions problem, as practical generalizations of MCR and MPP, 
respectively. We show that the IP approach can effectively solve 
these new problems. 
\end{itemize}
In addition to the main contributions, the study provides many 
additional, problem-specific heuristics that significantly enhance
the performance of the baseline IP formulation; some of these 
heuristics are also generally applicable. Unless explicitly mentioned 
and referenced, the enhancements (e.g., heuristics) described in the 
paper are also presented here for the first time. While IP-based methods 
have already been used in robotics, to the best of our knowledge, this 
work is the first one that summarizes a general IP framework that can be 
readily applied to a variety of path-based optimization problems, backed
by thorough simulation-based experimental evaluations. 

\noindent\textbf{Organization}. 
The rest of this paper is structured as follows. 
In Section~\ref{sec:preliminaries}, we formally define MPP, multi-robot MCR, and RCP. 
In Section~\ref{sec:method}, we outline the general IP solution methodology and 
describe two tried-and-true approaches for path encoding. 
In Sections~\ref{sec:applications}--\ref{sec:rcp}, we demonstrate how the IP model may be 
completed for the three diverse robotics problems and introduce many best practices 
along the way. We conclude in Section~\ref{sec:conclusion}. 
\changed{In Appendix, we provide a guidance to IP implementation.}

\vspace*{-2mm}
\section{Preliminaries}\label{sec:preliminaries} 
\vspace*{-1mm}
\subsection{Path-Based Optimization Problems}
For stating path-based optimization problems, we adopt the standard graph-theoretic 
encoding of paths. Consider a connected undirected graph $G(V, E)$ with vertex set $V$ 
and edge set $E$. For $v_i \in V$, let $N(v_i) = \{v_j \mid (v_i, v_j) \in E\}$ be the 
{\em neighborhood} of $v_i$. For a robot with initial and goal vertices $x^I, x^G \in V$, 
a {\em path} is defined as a sequence of vertices $P = (p^0, \dots, p^T)$ satisfying:
{\em (i)} $p^0 = x^I$; {\em (ii)} $p^T = x^G$; {\em (iii)} $\forall 1 \leq t \leq T$, 
$p^{t - 1} = p^t$ or $(p^{t - 1}, p^t) \in E$. For $n$ robots with their initial and 
goal configurations given as $X^I = \{x_1^I, \dots, x_n^I\}$ and $X^G = \{x_1^G,
\dots, x_n^G\}$, 
the paths are then $\mathcal P = \{P_1, \dots, P_n\}$, where $P_i = (p_i^0, \dots, p_i^T)$. 
These paths are not necessarily collision-free. We now outline three diverse 
classes of path-based optimization problems. 

\subsubsection{Multi-Robot Path Planning (MPP)} 
The main task in multi-robot path planning (MPP) is routing robots to 
their goals while avoiding robot-robot collisions, which happen when 
two robots meet at a vertex or an edge. Note that the graph-theoretic 
formulation already considers static obstacles. For $\mathcal P$ to be 
collision-free, $\forall 1 \leq t \leq T$, $P_i, P_j \in \mathcal P$ 
must satisfy: {\em (i)} $p_i^t \neq p_j^t$; {\em (ii)} $(p_i^{t - 1}, 
p_i^t) \neq (p_j^t, p_j^{t - 1})$. The objective for MPP is to minimize 
the {\em makespan} $T$, which is the time for all the robots to reach 
the goal vertices. 

In this paper, we also introduce a practical MPP generalization that 
allows {\em partial solutions}, i.e., only $k \le n$ robots are required 
to reach their pre-specified goals. This formulation models
diminishing reward scenarios where the payoff stops accumulating 
after a certain amount of targets are reached. Here, the ending vertices 
for the other $(n - k)$ paths can be arbitrary vertices in $V$. Note 
that $\mathcal P$ must still be collision-free. Being more general, the 
problem is also much more difficult to solve. We note that this setting
is different from the case where the robots are indistinguishable, which 
is much simpler and admits fast polynomial time algorithms. 

\begin{problem}[Generalized Time-Optimal MPP]\label{prob:mpp}
Given $\langle G, X^I, X^G, k \rangle$, find a collision-free path set $\mathcal P$ 
that routes at least $k$ robots to the goals and minimizes $T$.
\end{problem}

\begin{figure}[ht!]
    \centering
    \begin{overpic}[keepaspectratio, scale = 0.75]{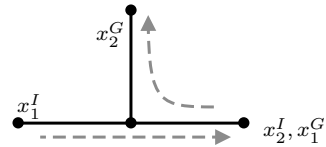}
        \put(35, 45){\footnotesize $x^G_2$}
        \put(2, 15){\footnotesize $x^I_1$}
        \put(105, 5){\footnotesize $x^I_2, x^G_1$}
    \end{overpic}
    \caption{\changed{A \mpp example with $V$ and $E$ colored in black. 
    When $k = n = 2$, a $3$ step collision-free min-makespan solution requires robot $1$
    to stay still in the beginning as robot $2$ moves to the middle vertex. 
    When $k = 1$, the $2$ step optimal solution only moves robot $2$ to its goal.}}\label{fig:mpp-example}
\end{figure}

\changed{An example for (partial) \mpp is provided in Fig.~\ref{fig:mpp-example}.}

\noindent\subsubsection{Multi-Robot Minimum Constraint Removal (MMCR)} 
Given a graph $(V, E)$, let an {\em obstacle} $O \subset V$ be a subset 
of vertices in $V$. Given a finite set of obstacles $\mathcal O = \{O_1, 
\dots\}$, the multi-robot minimum constraint removal problem seeks a 
solution $\mathcal P$ and a set of obstacles to be removed $\mathcal O_r 
\subset \mathcal O$, such that paths in $\mathcal P$ do not traverse 
through any obstacles in $\mathcal O \backslash \mathcal O_r$. The 
objective is to minimize the number of obstacles to be removed, 
i.e., $|\mathcal O_r|$. More formally: 

\begin{figure}[ht!]
    \centering
    \vspace{1mm}
    \begin{overpic}[keepaspectratio, width = 0.95\linewidth]{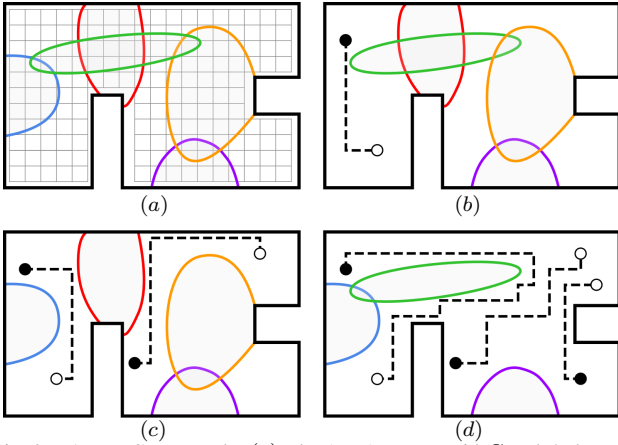}
        \put(22, 33.5){\footnotesize $(a)$}
        \put(73, 33.5){\footnotesize $(b)$}
        \put(22, -3){\footnotesize $(c)$}
        \put(73, -3){\footnotesize $(d)$}
    \end{overpic}
    \vspace{1mm}
    \caption{\changed{A \mmcr example. $(a)$ The (gray) square grid $G$ and 5 obstacles colored 
    in red, blue, green, orange and purple. $G$ is not drawn in the other sub-graphs for better 
    visibility. $(b)$ An optimal solution which removes the blue 
    obstacle when $n = 1$. The dashed line shows a feasible path for the robot to move from 
    its start (the circle) to the goal (the black dot). An alternative optimal solution is to 
    remove the green obstacle. $(c)$ The only optimal solution when $n = 2$. $(d)$ An optimal 
    solution when $n = 3$; alternative optimal solutions are also avaliable.}}\label{fig:mmcr-example}
\end{figure}

\begin{problem}[MMCR]
Given $\langle G, X^I, X^G, \mathcal O \rangle$, find $\mathcal P$ and $\mathcal O_r$ 
which minimizes $|\mathcal O_r|$, and for all 
$P_i \in \mathcal P, O \in \mathcal O \backslash \mathcal O_r, 0 \leq t \leq T$: 
$p_i^t \notin O$.
\end{problem}
\vspace*{-1mm}

\changed{An illustration of \mmcr is provided in Fig.~\ref{fig:mmcr-example}.}
\subsubsection{Reward Collection Problem (RCP)} 
Denote $\mathbb R_{\geq 0}$ as the set of non-negative real numbers. In a reward collection 
problem (RCP), a robot is tasked to travel on a graph $G$ for a limited amount of time 
$c^* \in \mathbb R_{\geq 0}$, while maximizing the reward it collects\footnote{Multi-robot 
RCP can be readily defined; we omit the such discussions given the already 
extensive multi-robot coverage and the page limit.}. Such a setup defines two functions: a cost 
function $C: P \to \mathbb R_{\geq 0}$ specifies the time already spent, and a reward 
function $R: P \to \mathbb R_{\geq 0}$ tracks the reward collected along $P$. Obviously,
{\em time} here may be replaced by other types of bounded resources, e.g., fuel. 

\vspace*{-1mm}
\begin{problem}[RCP]
Given $\langle G, x^I, x^G, C, R, c^* \rangle$, find $P$ that maximize $R(P)$ under the 
constraint $C(P) \leq c^*$,
\end{problem}
\vspace*{-1mm}

We work with two variations of RCP in this paper: the quadratic correlated orienteering 
problem (QCOP) and the optimal tourist problem (OTP).

In a {\em Quadratic Correlated Orienteering Problem (QCOP)}, 
each vertex $v_i \in V$ is associated with a reward $r_i \in \mathbb R_{\geq 0}$. 
If $v_i \in P$, not only $r_i$ but also partial rewards from vertices in $N(v_i)$ 
are collected if $P$ does not contain these vertices. Denote the correlated weights as 
$\{w_{ij} = 1 / |N(v_j)| \ | v_j \in N(v_i)\}$, the maximum possible reward collected 
from $v_i$ is $r_i + \sum_{v_j \in N(v_i)} w_{ij}r_j$. Suppose $x_i \in \{0, 1\}$ indicates 
whether $v_i \in P$, the total reward collected is 
\begin{align}\label{qcop-r}
R_{QCOP}(P) = \sum_{i = 1}^{|V|} 
(r_i x_i + \sum_{v_j \in N(v_i)} r_j w_{i j} x_i (x_j - x_i)).
\end{align} 
In QCOP, each edge $(v_i, v_j) \in E$ is associated with a time cost 
$c_{ij} \in \mathbb R_{\geq 0}$. The time constraint of QCOP requires that 
\begin{align}\label{qcop-t}
C_{QCOP}(P) = \sum_{t = 1}^T c_{p^{t - 1}, p^t} \leq c^*.
\end{align}

In an {\em Optimal Tourist Problem (OTP)}, 
the reward collected at vertex $v_i \in V$ is described by a non-decreasing function 
$R_i(t_i)$, where $t_i$ is the length of time that the robot spends at $v_i$. 
The total reward is then expressed as 
\begin{align}\label{otp-r}
R_{OTP}(P) = \sum_{v_i \in P} R_i(t_i).
\end{align} 
Similar to QCOP, the time constraint of OTP requires that 
\begin{align}\label{otp-t}
C_{OTP}(P) = \sum_{t = 1}^T c_{p^{t - 1}, p^t} + \sum_{v_i \in P} t_i \leq c^*.
\end{align}


It has been shown that optimal MPP, QCOP, and OTP are all NP-hard 
\cite{Yu2015IntractabilityPlanar,yu2016correlated,yu2014optimal}. MMCR is a 
multi-robot generalization of the single-robot MCR problem, which is known 
to be NP-hard \cite{hauser2014minimum}. As a consequence, MMCR is also 
computationally intractable.

\subsection{Integer Programming Basics} 

Integer programming (IP), roughly speaking, addresses a class of 
problems that optimize an objective function subject to integer 
constraints. A typical IP sub-class is integer linear programming 
(ILP). Given a vector $\mathbf{x}$ consists of integer variables, 
a general ILP model is expressed as:
\vspace*{1mm}
\begin{align*}
    \text{minimize} \,\,\, \mathbf{c}^T \mathbf{x} \quad 
		\text{subject to} \,\,\, A \mathbf{x} \leq \mathbf{b}.
\end{align*}

Here, $\mathbf{c}, \mathbf{b}$ are vectors and $A$ is a matrix. 
In general, solution to $\mathbf{x}$ must contain only integers. 
Note that the formulation is compatible with equality constraints 
since an equality constraint can be interpreted as two inequality 
constraints. 
As will be demonstrated, due to IP's rather straightforward 
formulation, the reduction from path-planning problems to IP are often 
not hard to achieve in low polynomial time using our methodology. 
Apart from the unsophiscated reductions, IP is also a well-known 
fundamental methematical problem that has been studied extensively. 
Thus, an IP model can often be solved quickly and optimally using 
solvers like Gurobi~\cite{gurobi}, Cplex~\cite{cplex} and 
GLPK~\cite{glpk}.

\begin{remark}
As a note on scope, our work studies IP formulations and path-based 
optimization problems from an algorithmic perspective. We do not handle 
execution details such as workspace discretization, uncertainties, and 
communication issues. These items are eventually to be manged by other 
parts in the system. For example, with proper synchronization and 
feedback based control, solutions generated by the ILP-based MPP 
solver~\cite{YuLav16TRO} can be readily executed on multi-robot hardware
platforms \cite{han2018sear}.
\end{remark}

\vspace*{-2mm}
\section{General Methodology and Path Encoding}\label{sec:method}
\vspace*{-1mm}
Our methodology for path-based optimization problems generally follows a 
straightforward two-step process. In the first step, a basic IP model is constructed 
that ensures only feasible paths are produced. This is the {\em path encoding} step. 
For example, in the case of a multi-robot path planning problem, the IP model must 
produce multiple paths that connect the desired start and goal configurations. 
In the second step, the optimization criteria and additional constraints, e.g., for 
collision avoidance, is enforced. In our experience, the path encoding step has a 
limited variations whereas the second step often has more variations and requires 
some creativity. 

In this section, we provide detailed descriptions of two IP models for encoding 
paths: a {\em base-graph encoding} that works with the original edge set $E$, and 
a {\em time-expanded-graph encoding} that encodes paths on a new graph generated 
by making multiple time-stamped copies of the vertex set $V$. Feasible
assignments to the integer variables in a given model correspond (in a 
one-to-one manner) to all possible paths in the original problem. The 
optimization step will be introduced in the next section. 

\vspace*{-2mm}
\subsection{Base-Graph Encoding}
\vspace*{-1mm}
We define a {\em non-cyclic path} to be a path $P$ that goes through 
any vertex at most once, i.e., $\forall 0 \leq i < j \leq T, p^i \neq p^j$. 
Given $\langle G, x^I, x^G \rangle$, the base graph encoding introduces 
a binary variable $x_{v_i, v_j}$ for each edge $(v_i, v_j) \in E$ to indicate whether $P$ uses 
$(v_i, v_j)$. The following constraints must be satisfied: 
\begin{align}\label{constraint:ig} 
\sum_{v_i \in N(x^I)} x_{x^I, v_i} = \sum_{v_i \in N(x^G)} x_{v_i, x^G} = 1; 
\end{align}
\begin{align}\label{constraint:ig2} 
\sum_{v_i \in N(x^I)} x_{v_i, x^I} = \sum_{v_i \in N(x^G)} x_{x^G, v_i} = 0; 
\end{align}
\begin{align}\label{constraint:in-out}
\sum_{v_j \in N(v_i)} x_{v_i, v_j} = \sum_{v_j \in N(v_i)} x_{v_j, v_i} \leq 1, 
\forall v_i \in V \backslash \{x^I, x^G\}. 
\end{align}
Here, constraint (\ref{constraint:ig}) and (\ref{constraint:ig2}) make $P$ 
starts from $x^I$ and ends at $x^G$. Constraint (\ref{constraint:in-out}) ensures 
that for each vertex, one outgoing edge is used if and only if an 
incoming edge is used. It prevents the path from creating multiple 
branches, and also forces each vertex to appear in $P$ at most once. 

A solution from this formulation could contain {\em subtours}, which are cycles 
formed by edges that are disjoint from $P$ (see Fig.~\ref{fig:subtour}). 
As introduced in \cite{yu2016correlated}, these subtours can be eliminated by creating integer 
variables $3 \leq u_i \leq |V|$ for each $v_i \in V \backslash \{x^I, x^G\}$, and adding 
the following constraint for each pair of vertices $v_i, v_j \in V \backslash \{x^I, x^G\}$:
\begin{align}\label{constraint:subtour}
u_i - u_j + 1 \leq (|V| - 3)(1 - x_{v_i, v_j}). 
\end{align}

\begin{figure}[ht!]
\centering
\vspace{1mm}
\begin{overpic}[scale = 0.75]{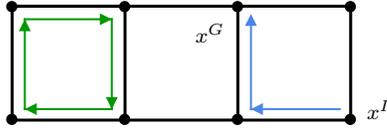}
    \put(103, 2){\footnotesize $x^I$}
    \put(54, 24){\footnotesize $x^G$}
\end{overpic}
\vspace{-2mm}
\caption{Illustration of a subtour in a base-graph encoding formulation. The blue 
lines show a feasible path from $x^I$ to $x^G$. The green cycle is a possible 
undesirable disjoint subtour.}
\label{fig:subtour}
\end{figure}

\begin{proposition}\label{thm:original} 
Given $\langle G, x^I, x^G \rangle$, there exists a bijection between solutions 
to the base-graph encoding IP model and all non-cyclic paths in $G$ from $x^I$ 
to $x^G$.
\end{proposition}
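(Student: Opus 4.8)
The plan is to construct the bijection explicitly and then verify that each direction is well defined. Given a feasible assignment to the base-graph IP, I would read off the subgraph $H$ consisting of the directed edges $(v_i,v_j)$ with $x_{v_i,v_j}=1$, and claim that $H$ is exactly one simple directed $x^I$--$x^G$ path. Conversely, given a non-cyclic path from $x^I$ to $x^G$, I would set $x_{v_i,v_j}=1$ on the oriented edges it traverses and $0$ elsewhere, and assign the potentials by numbering the interior vertices $3,4,\dots$ in visit order (off-path interior vertices taking any unused values in $\{3,\dots,|V|\}$). At the level of edge sets these two maps are evidently inverse, so the substance is to show each map lands in its target set and that the correspondence is one-to-one.

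First I would establish the structural claim for the forward map. Constraints \eqref{constraint:ig}--\eqref{constraint:ig2} force out-degree $1$ and in-degree $0$ at $x^I$, and in-degree $1$ and out-degree $0$ at $x^G$, while \eqref{constraint:in-out} makes every interior vertex balanced with in- and out-degree each at most $1$. A balanced-digraph (Eulerian-type) decomposition then writes $H$ as a single $x^I$-to-$x^G$ trail together with vertex-disjoint closed trails among interior vertices, and the capacity bound of $1$ promotes these to a vertex-simple path and simple cycles. Since $x^I$ has no incoming edge and $x^G$ no outgoing edge, neither endpoint can lie on a cycle, so every cycle is confined to interior vertices --- precisely the domain of the subtour constraint.

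The hard part, and where I would spend the most care, is the Miller--Tucker--Zemlin subtour argument in \eqref{constraint:subtour}. To eliminate cycles: if a directed simple cycle $v_{i_1}\to\cdots\to v_{i_\ell}\to v_{i_1}$ had all its indicators equal to $1$, then each instance of \eqref{constraint:subtour} reads $u_{i_k}-u_{i_{k+1}}+1\le 0$; summing around the cycle telescopes the potential differences to zero and leaves $\ell\le 0$, a contradiction, so $H$ is cycle-free and hence a single simple path. To show that no genuine path is excluded: under the visit-order numbering each used edge satisfies $u_i+1\le u_j$ and thus meets \eqref{constraint:subtour}, while on an unused edge the constraint reduces to an upper bound on $u_i-u_j$. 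Here I would check the right-hand constant against the span of the potentials in $\{3,\dots,|V|\}$ carefully, since this is exactly the place where an off-by-one in the ``big-$M$'' coefficient $(|V|-3)$ would either wrongly forbid a valid (near-Hamiltonian) path or fail to kill a cycle.

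Finally I would make precise what is being counted. Because only the $x$-variables encode $H$ and hence $P$, whereas the potentials $u_i$ on off-path vertices are not uniquely pinned down, I would read the statement as a bijection between feasible edge-selection patterns $\{x_{v_i,v_j}\}$ --- those extendable to a full feasible $(x,u)$ assignment --- and non-cyclic $x^I$--$x^G$ paths, with the $u_i$ serving only as an existential acyclicity certificate. Injectivity of the pattern-to-path map is then immediate, since the edge set determines $P$; surjectivity is the backward construction of the first paragraph; and distinct paths produce distinct edge sets and thus distinct $x$-patterns, giving injectivity of the reverse map and completing the bijection.
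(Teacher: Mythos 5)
Your overall route is the same as the paper's --- an explicit map in each direction, reading the path off the positive edge variables and, conversely, setting the traversed edge variables to $1$ --- but you carry out the verification at a level of detail the paper omits entirely. The paper's proof simply asserts that constraints \eqref{constraint:in-out} and \eqref{constraint:subtour} yield a non-repetitive vertex sequence, and that a non-cyclic path satisfies \eqref{constraint:subtour} ``because it has no subtours,'' with no degree decomposition, no telescoping argument, and no assignment of the $u_i$ at all. Your decomposition of the selected subgraph into one $x^I$--$x^G$ path plus interior cycles, the summation argument that kills each cycle ($\ell\le 0$), and the observation that the bijection must be read at the level of edge-variable patterns (since the $u_i$ on off-path vertices are not pinned down, a literal bijection with full $(x,u)$ assignments fails) are all correct, and the last point is in fact needed for the proposition to be true as stated.

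The one place you stop short is the check you explicitly defer: whether the visit-order potentials satisfy \eqref{constraint:subtour} on \emph{unused} pairs. If you carry it out, it fails for the constant as written. With $x_{v_i,v_j}=0$ the constraint demands $u_i-u_j\le |V|-4$, while the potentials range over the $|V|-2$ integers $\{3,\dots,|V|\}$, whose span is $|V|-3$. For a path visiting every interior vertex, the requirement $u_j\ge u_i+1$ along used edges forces the potentials to be exactly $3,4,\dots,|V|$ in visit order, so the instance of \eqref{constraint:subtour} for the ordered pair consisting of the last and the first interior vertex gives $|V|-2\le |V|-3$, a contradiction, and no alternative choice of potentials can repair this. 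Hence surjectivity genuinely fails for such near-Hamiltonian paths under the stated coefficient; the big-$M$ should be $|V|-2$, after which your argument closes in both directions (the cycle-elimination sum still yields $\ell\le 0$). So your proof is complete modulo exactly the step you flagged, and completing that step uncovers an off-by-one in the formulation itself rather than a flaw in your approach.
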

\begin{proof}
($\Rightarrow$) Given a feasible solution of the base-graph encoding IP model, a 
non-cyclic path starts from $x^I$ is constructed by following positive edge 
variables until reaching $x^G$. The path is guaranteed to be feasible since  
{\em (i)} by constraint (\ref{constraint:ig}) and (\ref{constraint:ig2}), the path can 
only start at $x^I$ and end at $x^G$, {\em (ii)} by constraint 
(\ref{constraint:in-out}) and (\ref{constraint:subtour}), the path is composed of 
non-repetitive vertices in $V$ connected by edges in $E$.

($\Leftarrow$) A non-cyclic path $P = (p^0, \dots, p^T)$ can be translated to a feasible 
solution to the IP model by assigning the corresponding edge variables to $1$ and all 
others to $0$. These values satisfies constraint (\ref{constraint:ig}) 
(\ref{constraint:ig2}) (\ref{constraint:in-out}) (\ref{constraint:subtour}) because 
$P$ is a sequence of vertices starts from $x^I$, ends at $x^G$, connected by edges, 
and has no subtours.
\end{proof}

When $x^I = x^G$, we split $x^I$($x^G$) into two vertices $v_{in}, 
v_{out}$ and connect them to all the vertices in $N(x^I)$. A path can then be 
formulated on the graph with vertex set $(V \backslash \{x^I\}) \cup \{v_{in}, v_{out}\}$. 
The resulting path is acyclic in the new graph, but interpreted as a cycled 
path in $G$ and contains $x^I$($x^G$). 

\vspace*{-2mm}
\subsection{Time-Expanded-Graph Encoding}

\begin{figure}[ht!]
\centering
\begin{overpic}[scale = 0.75]{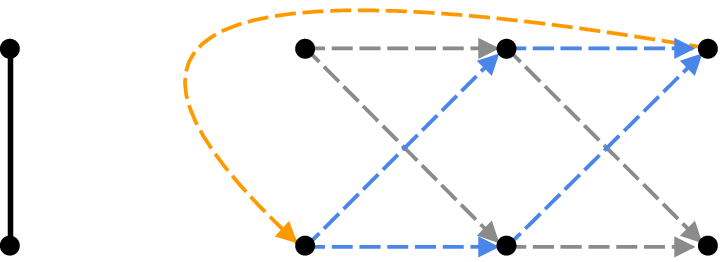}
    \put(5, 0){\footnotesize $x^I$}
    \put(5, 27){\footnotesize $x^G$}
    \put(37, -5){\footnotesize $t = 0$}
    \put(65, -5){\footnotesize $t = 1$}
    \put(92, -5){\footnotesize $t = 2$}
\end{overpic}
\vspace{1mm}
\caption{The construction of a time-expanded graph. The original graph is shown 
on the left, with two vertices connected by an undirected edge. The time-expanded-graph 
with $T = 2$ is shown on the right. Directed edges (blue and grey dashed 
lines) are added to the adjacent vertices in the three copies of $V$. The feedback 
edge is shown as an orange dashed line. The grey edges are removed by reachability tests.}
\label{fig:time-expanded}
\end{figure}

This path encoding method uses a time-expanded-graph instead of the original 
graph $G(V, E)$. Given a fixed time horizon $T \in \mathbb N$, we make $T + 1$ 
copies of $V$, namely $V^0, \dots, V^T$. Neighboring vertices in adjacent time 
steps are then connected by {\em directed edges}: for $1 \leq t \leq T$, the 
edge set between $V^{t - 1}$ and $V^t$ is $\{(v_i^{t - 1}, v_j^t) | \forall 
v_i \in V, v_j \in N(v_i) \cup \{v_i\}\}$. We then add a directed {\em feedback 
edge} connecting $x^{G, T} \in V^T$ to $x^{I, 0} \in V^0$. An example is 
provided in Fig.~\ref{fig:time-expanded}. Similar to base-graph encoding, we 
use a binary variable $x_{v_i, v_j}^t$ to indicate whether $(v_i^{t - 1}, v_j^t)$ 
is used. The variable associated with the feedback edge is $x_{x^G, x^I}^0$. By 
denoting the outgoing (resp. incoming) edges of $v_i$ in this time-expanded 
graph as $N^-(v_i)$ (resp. $N^+(v_i)$), $\forall 1 \leq t \leq T$, constraints 
(\ref{constraint:ig}) (\ref{constraint:ig2}) (\ref{constraint:in-out}) are now 
expressed as:
\begin{align}\label{constraint-feedback} 
\sum_{v_i \in N^+(x^G)} x_{v_i, x^G}^T = x_{x^G, x^I}^0 = 1; 
\end{align}
\begin{align}\label{constraint-flow}
\sum_{v_j \in N^-(v_i)} x_{v_i, v_j}^t = \sum_{v_j \in N^+(v_i)} x_{v_j, v_i}^{t - 1}, 
\ \forall v_i \in V^1 \cup \dots \cup V^T.
\end{align}

The representation of a feasible path in this time-expanded graph is a sequence of 
directed edges that starts from $x^{I, 0}$, travel through exactly one vertex in 
each of $V^0, \dots, V^T$, and finally goes back to $x^{I, 0}$ uses the feedback 
edge. Similarly, a 1-1 solution correspondence exists in this case \cite{YuLav16TRO}. 

\vspace*{-1mm}
\begin{proposition}\label{thm:time-expanded}
There is a bijection function between feasible paths in $G$ with length less than $T + 1$, 
and feasible paths in the $T$-step time-expanded-graph. 
\end{proposition}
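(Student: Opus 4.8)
The plan is to establish the bijection with the same two-direction argument used for Proposition~\ref{thm:original}, but to replace the in-out-degree and subtour-elimination reasoning with a flow-conservation argument tailored to the layered structure of the time-expanded graph. The structural fact I would prove first is that any feasible assignment of the variables $x_{v_i,v_j}^t$ carries exactly one unit of flow around a single directed cycle: constraint~(\ref{constraint-feedback}) forces the feedback variable $x_{x^G,x^I}^0$ to $1$ and thereby injects one unit into layer $V^0$, while constraint~(\ref{constraint-flow}) conserves that unit at every time-stamped vertex. Because every edge other than the feedback edge points from layer $V^{t-1}$ to layer $V^t$, and the feedback edge is the unique edge entering $V^0$ (landing at $x^{I,0}$), the unit of flow cannot branch or cycle within a layer; it must advance monotonically through $V^0, V^1, \dots, V^T$ and close only via the feedback edge. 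Hence a feasible solution selects exactly one vertex $p^t$ in each layer $V^t$, with $p^0 = x^I$ and $p^T = x^G$ forced by~(\ref{constraint-feedback}).

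For the forward direction ($\Rightarrow$), I would read off the sequence $(p^0, \dots, p^T)$ from the selected vertices and verify it is a feasible path in $G$. The only transitions available between consecutive layers are the edges $(v_i^{t-1}, v_j^t)$ with $v_j \in N(v_i) \cup \{v_i\}$, so each consecutive pair satisfies either $p^{t-1} = p^t$ (a waiting move along a self-copy edge) or $(p^{t-1}, p^t) \in E$, which is exactly condition (iii) of the path definition; combined with $p^0 = x^I$ and $p^T = x^G$, this makes the sequence feasible. For the reverse direction ($\Leftarrow$), given a feasible path $(p^0, \dots, p^T)$, I would set $x_{p^{t-1},p^t}^t = 1$ for $1 \le t \le T$, set the feedback variable $x_{x^G,x^I}^0 = 1$, and set all remaining variables to $0$; each $p^t$ then has exactly one chosen incoming and one chosen outgoing edge, so~(\ref{constraint-flow}) holds everywhere and~(\ref{constraint-feedback}) holds since $p^T = x^G$. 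Injectivity in both directions follows because the two constructions are mutually inverse: distinct sequences select distinct edge sets, and distinct feasible solutions yield distinct per-layer vertices.

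The step I expect to be the main obstacle is the precise bookkeeping behind the phrase ``length less than $T+1$,'' which must be reconciled with the self-copy (waiting) edges so that the correspondence is genuinely one-to-one rather than merely onto. I would fix the convention that a path is identified with its full length-$T$ sequence $(p^0, \dots, p^T)$, so that waiting moves, including waiting at $x^G$ after an early arrival, are part of the sequence itself; under this convention each such sequence corresponds to a unique feasible solution and conversely. The delicate point is to avoid double-counting a genuinely shorter walk against its padded-to-length-$T$ extension, which is precisely why the length must be read off the sequence rather than from the number of non-trivial moves. Once this identification is made explicit, the flow-to-cycle argument above closes all remaining gaps, and the 1-1 correspondence claimed in~\cite{YuLav16TRO} follows with only routine verification.
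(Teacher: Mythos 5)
Your proof is correct, and it is worth noting that the paper itself does not actually prove Proposition~\ref{thm:time-expanded}: it asserts the correspondence by analogy with Proposition~\ref{thm:original} and defers to \cite{YuLav16TRO}. Your argument supplies the missing content, and it identifies the right key lemma, namely the layered-flow observation that constraints (\ref{constraint-feedback}) and (\ref{constraint-flow}) force exactly one unit of flow across every boundary between $V^{t-1}$ and $V^t$, hence exactly one selected vertex per layer; this is precisely what replaces the in/out-degree and subtour-elimination reasoning of the base-graph proof, since no subtour can arise in a layered DAG closed by a single feedback edge. Your handling of the phrase ``length less than $T+1$'' is also the right call: the correspondence is only a genuine bijection once a path is identified with its full padded sequence $(p^0,\dots,p^T)$ including waiting moves, because a shorter path admits many distinct paddings and would otherwise be counted against several solutions. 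The one point to tighten is that constraint (\ref{constraint-flow}) as written is imposed only on $V^1\cup\dots\cup V^T$, so your claim that the unit injected by the feedback edge must exit $V^0$ from $x^{I,0}$ (rather than the single unit of cross-boundary flow originating at some other layer-$0$ vertex while the feedback unit silently terminates) requires conservation at $x^{I,0}$ as well; this is how the model of \cite{YuLav16TRO} is actually set up, so it is an imprecision in the paper's indexing rather than a flaw in your idea, but your write-up should state explicitly that conservation at the start vertex is being used.
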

\vspace*{-1mm}

In a time-expanded-graph encoding, the length of the path generated is limited 
by the time horizon $T$. The path can contain any vertex for multiple times. 
The number of variables in the time-expanded-graph encoding can be reduced by
performing {\em reachability tests} (see Fig.~\ref{fig:time-expanded}), which 
remove edges not reachable from $x^I$ or $x^G$. Reachability tests do not 
affect completeness and optimality. Scalability can be further improved using 
{\em $k$-way split} \cite{YuLav16TRO}, a divide-and-conquer heuristic that 
splits a problem into smaller sub-problems. The sub-problems require much 
shorter time to solve together than solving a single large problem. 

\vspace*{-2mm}
\subsection{Basic Extensions of the Two Encoding}
\vspace*{-1mm}

Both encoding can handle the case when a robot has multiple initials or goals 
to choose. In base-graph encoding, this is done by adding one virtual vertex, 
and connecting it to all possible initials and goals. In the time-expanded-graph 
encoding formulation, the same objective can be achieved by adding multiple 
feedback edges. 

Both encoding can also be extended to the multi-robot scenario. Which is 
achieved  by simply making one copy of all variables for each robot. The paths 
can then be planned disregarding mutual collisions. As we will show soon, with 
additional constraints, the time-expanded-graph encoding is able to generate 
collision-free paths. 

Here we note that our IP models are solved using Gurobi Solver~\cite{gurobi}. 
All experiments are executed on an Intel\textsuperscript{\textregistered} 
Core\textsuperscript{TM} i7-6900K CPU with 32GB RAM at 2133MHz.

\vspace*{-1mm}
\section{Time-Optimal Multi-Robot Path Planning}\label{sec:applications}
\vspace*{-1mm}
A high-performance time-optimal \mpp IP model was proposed in~\cite{YuLav16TRO} 
using the time-expanded-graph encoding. We review this baseline model and 
introduce two generic new heuristics for trimming the state space. We then 
introduce an updated IP model for MPP with partial solution. 

The basic method \cite{YuLav16TRO} first calculates an underestimated $T$ 
by routing each robot to its goal without considering robot-robot collisions.
This is achieved by running $n$ A* searches and taking the maximum 
path length. Then, an IP model is built and the feasibility is checked: 
for each robot $1 \leq r \leq n$, a set of variables $\{x_{v_i, v_j}^t\}$ 
satisfying constraints (\ref{constraint-feedback}) and (\ref{constraint-flow}) 
is created and renamed as $\{x_{r,v_i, v_j}^t\}$. The method tries to find 
a feasible variable value assignment with the following additional constraints: 
for all $0 \leq t \leq T, v_i \in V^t$,
\begin{align}\label{constraint-mpp-collision-v} 
\sum_{r = 1}^n \sum_{v_j \in N^+(v_i)} x_{r, v_j, v_i}^t \leq 1,
\end{align}
\begin{align}\label{constraint-mpp-collision-e} 
\sum_{r = 1}^n x_{r, v_i, v_j}^t + 
\sum_{r = 1}^n x_{r, v_j, v_i}^t \leq 1, \forall v_j \in N^+(v_i). 
\end{align}
Here, constraint (\ref{constraint-mpp-collision-v}) prevents robots from 
simultaneously occupying the same vertex, while constraint 
(\ref{constraint-mpp-collision-e}) eliminates head-to-head collisions on 
edges in $E$. By Proposition~\ref{thm:time-expanded}, an infeasible model indicates 
that no feasible solution exists in makespan $T$. Then, with $T$ incremented by $1$, 
the model is re-constructed and solved again. Once the model has a feasible 
solution, a solution to \mpp with optimal makespan is then extracted. 

\noindent\textbf{Effective new heuristics}. 
Intuitively, the time to solve an IP is often negatively correlated 
with the number of variables in the model. This suggests that the 
computation time may be reduced if we remove vertices in $V^0, \dots, V^T$ 
which are not likely to be part of the solution path. Recall that when 
calculating the underestimated makespan, for each robot $r$, a shortest path 
$P_r^*$ from $x_r^I$ to $x_i^G$ is obtained. When the graph is not densely 
occupied, we are likely to find a solution by make some minor modifications
to these initial candidate paths. Building on the analysis, we propose two 
new heuristics, based on reachability analysis, to reduce the number 
of variables in the IP model. 

\vspace*{1mm}
\subsubsection{Tubular Neighborhood}
Fixing some parameter $h_t \in \mathbb N$, for robot $r$, the 
time-expanded-graph includes $v_i$ only if $v_i$ is within $h_t$ distance 
from some vertices in $P_r^*$. The reachability region in this case
mimics a {\em tube} around the candidate path. The rationale behind 
the heuristic is that, in general, the actual path a robot takes is not 
likely to significantly deviate from the reference path $P_r^*$.

\vspace*{1mm}
\subsubsection{Reachability Sphere}
Fixing some parameter $h_s \in \mathbb N$, for robot $r$, the 
time-expanded-graph includes $v_i$ only if $v_i$ is within $h_s$ distance 
from the $\lfloor t |P_r^*| / T \rfloor$-th element in $P_r^*$. The basic 
principle behind the reachability sphere heuristic is similar to that 
for the tubular neighborhood heuristic.

The effect of tubular neighborhood and reachability sphere heuristics are 
visualized on the left and right of Fig.~\ref{fig:heuristics}, respectively. 
In this example, we visualize the variables when $t = 0.4T$ and assume for robot $r$, $|P_r^*| = 
0.7 T$. The dashed straight lines indicate $P_r^*$, with their left 
ends $x_r^I$ and right ends $x_r^G$. For clearness, all other vertices are 
not shown. If no heuristic is applied, $V^t$ contains all the vertices in 
the entire canvas. Through reachability tests, a vertex is removed from $V^t$ 
if it is not reachable from $x^I$ in time $t$, or from $x^G$ in time $T - t$. 
Reachability tests remove all the vertices not in the intersection of the 
two arcs centered at $x^I$ and $x^G$, making $V^t$ contain only the vertices 
in the region with red boundaries. The tubular neighborhood heuristic with 
$h_t = 0.12 T$ on the left sub-graph and the reachability sphere heuristic 
with $h_s = 0.18 T$ on the right then removes all the vertices out of the 
shapes with blue borders. The vertices that are copied to $V^t$ are colored 
in orange.

\begin{figure}[ht!]
\centering
\vspace{1mm}
\begin{overpic}[width = 0.8 \linewidth, keepaspectratio]{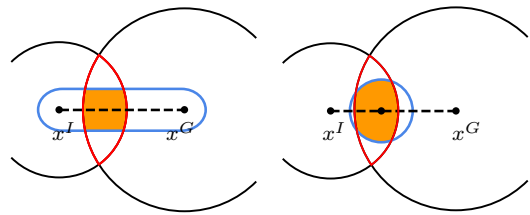}
    \put(8, 14){\footnotesize $x^I$}
    \put(30, 14){\footnotesize $x^G$}
    \put(60, 14){\footnotesize $x^I$}
    \put(85, 14){\footnotesize $x^G$}
\end{overpic}
\vspace{-1mm}
\caption{Illustration of tubular neighborhood (left) and reachability sphere
(right) heuristics.}
\label{fig:heuristics}
\end{figure}

We evaluate how the heuristics influence the performance using randomly 
generated test cases on a $24 \times 18$ grid. As it is shown in 
Fig.~\ref{fig:heuristics-results}, the number of variables is reduced by 
more than $70\%$ when $h_t \leq 2$ or $h_s \leq 2$; the computation time 
is reduced by at least $60\%$ when $n \le 60$. In the best case (the red
curve), the reduction in computation time is over ten fold, which is
very significant. Somewhat to our surprise, for more than $60$ robots, 
reduced variable count in the IP model does not always translate to faster 
solution time. After digging in further, this seems to be related to how 
the Gurobi solver works: we could verify that our heuristics interfere 
with Gurobi's own heuristics when there are too many robots. We report 
that the achieved optimality is not affected by the new heuristics 
for all the test cases. 




\begin{figure}[ht!]
\centering
\includegraphics[keepaspectratio]{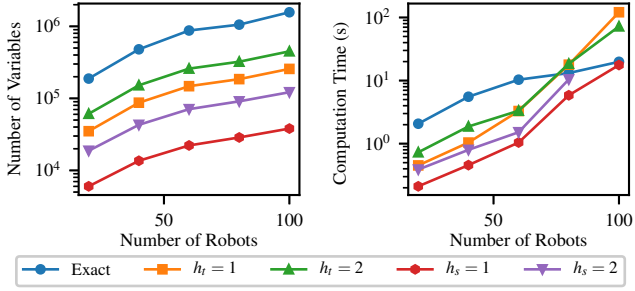}
\vspace{-7mm}
\caption{Evaluation result of the proposed new heuristics for MPP. (left) 
The number of vertices in the IP models versus $n$, (right) computation 
time of the IP method versus $n$, with different heuristic parameters. 
The {\em Exact} entries indicate the original IP method.}
\label{fig:heuristics-results}
\end{figure}


\noindent\textbf{Partial Solutions}.
To accommodate the $k \le n$ constraint in Problem~\ref{prob:mpp}, we 
prioritize using the individual path lengths in determining the time expansion 
parameter $T$. That is, we pick $T$ to the maximum length of the shortest 
$k$ robot paths out of the $n$ paths. For the rest of the $n - k$ robots, we 
allow their goals in the IP model to be in a neighborhood of their respective 
specified goals. This is achieved through the heuristics of adding additional 
feedback edges (see Fig.~\ref{fig:multi-feedback}) for these $(n - k)$ robots. 
Note that now the reachability test should not be applied to the goals of these 
$(n-k)$ robots. 
\begin{figure}[ht!]
\vspace*{-4mm}
\centering
\begin{overpic}[scale = 0.75]{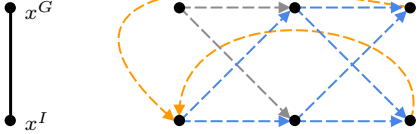}
    \put(5, 0){\footnotesize $x^I$}
    \put(5, 27){\footnotesize $x^G$}
\end{overpic}
\vspace{-1mm}
\caption{Multiple feedback edges for the same example in 
Fig.~\ref{fig:time-expanded} visualized using the same color and dash 
style.}
\label{fig:multi-feedback}
\end{figure}

\begin{figure}[ht!]
\vspace*{-3mm}
\centering
\includegraphics[keepaspectratio, width = 0.6\linewidth]{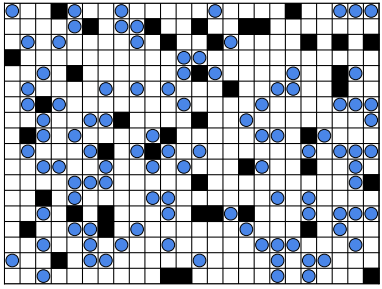}
\vspace{-2mm}
\caption{An example of $24 \times 18$ grid with $10\%$ vertices removed (visualized 
as black cells) and $100$ randomly placed robots (visualized as blue circles).}
\label{fig:2418}
\end{figure}

\begin{figure}[ht!]
\vspace*{-3mm}
\centering
\includegraphics[keepaspectratio, width = 0.8\linewidth]{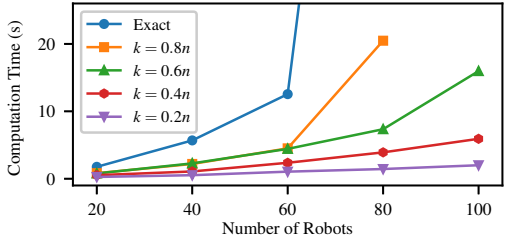}
\vspace{-3mm}
\caption{Computation time of the \mpp partial solver versus $n$, with different $k$ values. 
The {\em Exact} entry indicates the original IP method.}
\label{fig:partial-result}
\end{figure}

In evaluating the general \mpp solver, we use a $24 \times 18$ grid with 
$10\%$ vertices removed to simulate obstacles (see Fig.~\ref{fig:2418} for 
an example). As shown in Fig.~\ref{fig:partial-result}, running time is 
significantly reduced when we only need a partial solution. 

\noindent\textbf{Support for a non-fully labeled problem}. 
We can also adapt the existing IP model to resolve the problem in which a 
group of robots are indistinguishable, i.e., they are assigned a set of 
goals without individual targets. The final configuration of these robots 
in a solution can be an arbitrary permutation of the goals. This is also 
known as the {\em $k$-colored} motion planning problem \cite{SolHal12}. 
To update the model, suppose a group of $m$ robots share $m$ goal vertices, 
we create a single copy of $\{x_{v_i, v_j}^t\}$ for this group, and add 
$m^2$ feedback edges between all the pairs of initials and goals. The 
summation of the variables associated with these edges is set to be $m$.

\vspace*{-1mm}
\section{Multi-Robot Minimum Constraint Removal}\label{sec:mmcr}
\vspace*{-1mm}
As implicitly stated in~\cite{hauser2014minimum}, the state space of \mcr can 
be reduced by building a graph reflecting the coverage of different combinations of obstacles. 
Here, we explicitly construct 
this graph $G_{\mcr}(V_{\mcr}, E_{\mcr})$ such that each element in $V_{\mcr}$ 
is a set of connected vertices that exist in the same set of obstacles. 
Specifically, $V_i \in V_{\mcr}$ satisfies the following constraints: 
{\em (i)} $V_i \subset V$; 
{\em (ii)} $\forall v_j, v_k \in V_i, O \in \mathcal O$, $v_j \in O$ if and only if $v_k \in O$; 
{\em (iii)} $\forall v_j, v_k \in V_i$, there exists a path $P$ from $v_j$ to $v_k$, and 
formed by elements in $V_i$. 
Under this definition, $\bigcup_{V_i \in V_{\mcr}} V_i = V$, and 
$\forall V_i, V_j \in V_{\mcr}$, $V_i \cap V_j = \emptyset$. 
In our implementation, $V_{\mcr}$ is built by {\em iteratively} selecting a 
random $v \in V$ and run {\em breadth first search} from $v$ until all the 
leaf nodes collide with different sets of obstacles from $v$. An undirected 
edge $(V_i, V_j)$ is added to the edge set $E_{\mcr}$ if there exist $v_i 
\in V_i$ and $v_j \in V_j$, such that $(v_i, v_j) \in E$. An example 
$G_{\mcr}$ is shown in Fig.~\ref{fig:mcr-graph}. 

\begin{figure}[ht!]
\centering
\includegraphics[scale = 0.6]{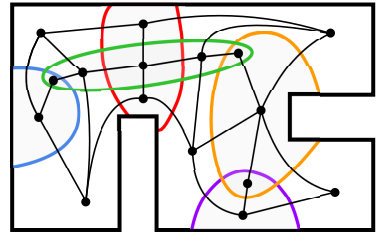}
\vspace*{-2mm}
\caption{$G_{\mcr}$ for the problem instance in Fig~\ref{fig:mmcr-example} 
illustrated using black dots and black lines. Note that each vertex can be considered 
as a high level abstraction of a set of vertices in a more detailed roadmap (i.e. $G$).}
\label{fig:mcr-graph}
\end{figure}

For each robot $1 \leq r \leq n$, we denote $V_r^I \in V_{\mcr}$ as the set contains $x_r^I$ 
and $V_r^G \in V_{\mcr}$ as the set contains $x_r^G$. We assume $V_r^I \neq V_r^G$ since 
otherwise the set of obstacles that need to be removed for robot $r$ is trivially 
$\{O \in \mathcal O | O \cap V_r^I \neq \emptyset\}$. With slight abuse of notation, 
we denote $N(V_i)$ as the neighborhood of $V_i$. Following base-graph encoding, 
for each robot $1 \leq r \leq n$, a set of variables $\{x_{V_i, V_j}\}$ satisfying 
constraints (\ref{constraint:ig}) (\ref{constraint:ig2}) (\ref{constraint:in-out}) are 
created and renamed as $\{x_{r, V_i, V_j}\}$.

In the rest of this section, unless otherwise stated, a {\em path} refers to a 
path in $G_{\mcr}$. We introduce binary variables $\{x_{V_i} | V_i \in 
V_{\mcr}\}$ to indicate if at least one of $n$ paths contains $V_i$, and 
$\{x_{O_i} | O_i \in \mathcal O\}$ to indicate whether $O_i$ must be removed 
to make the paths do not collide with obstacles. The constraints for these 
variables are: 
\begin{align}\label{constraint:mcr-v} 
L \ x_{V_i} \geq \sum_{r = 1}^n \sum_{V_j \in N(V_i)} (x_{r, V_i, V_j} + x_{r, V_j, V_i}); 
\end{align}
\begin{align}\label{constraint:mcr-o} 
L \ x_{O_i} \geq \sum_{V_i \in V_{\mcr}, V_i \subset O_i} x_{V_i}.
\end{align}
Here, $L$ is a large constant. Constraint (\ref{constraint:mcr-v}) 
assigns positive values to $x_{V_i}$ only if $V_i$ is in any path. Constraint 
(\ref{constraint:mcr-o}) ensures that $\forall V_i \in V_{\mcr}$, 
if $x_{V_i} = 1$, then $\{x_{O_j} | O_j \in \mathcal O, V_i \subset O_j\}$ 
are all assigned positive values. The objective for this model is to minimize 
$\sum_{O_i \in \mathcal O} x_{O_i}$, i.e., the number of obstacles to be removed. 
Note that subtour elimination is unnecessary in \mmcr~since neither a subtour 
nor duplicate elements in a path reduce the objective value. Robot-robot 
collision is not considered (but can be if needed). After solving this IP, the 
obstacles to be removed can be extracted from the model, after which an 
actual robot trajectory can be found easily.


We compare the IP approach to two search-based solvers from~\cite{hauser2014minimum}. 
For \mcr, a search node in these solvers is a tuple $\langle v, \mathcal 
O_r\rangle$, where $v$ is the robot's current location, and $\mathcal O_r$ is the 
set of all the obstacles encountered from $v$ tracing back to $x^I$. 
Different pruning methods are then applied to reduce the search space, giving 
rise to exact and greedy solvers. 
Both the exact and greedy solvers can be extended to \mmcr via maintaining 
locations for all robots in the search state. 

The comparison results are given in Fig.~\ref{fig:mcr-result}. The left figure 
is for \mcr evaluated on a $100 \times 100$ grid and the right one for \mmcr 
with $n = 2$ on a $10 \times 10$ grid. All test cases are generated by randomly 
placing arbitrary sized rectangular obstacles in the grids. 

\begin{figure}[ht!]
    \vspace*{-1mm}
    \centering
    \includegraphics[keepaspectratio]{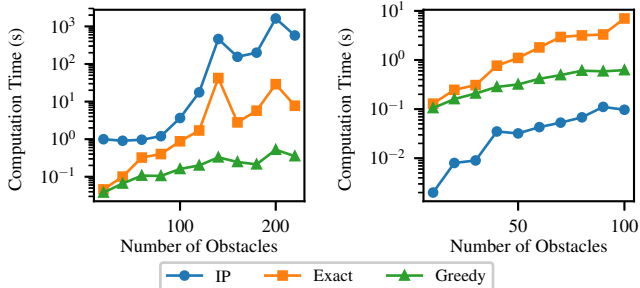}
    \vspace{-7mm}
    \caption{Computation time of different \mcr and \mmcr methods versus 
		the number of obstacles. The left subfigure shows the result for \mcr, 
		while the right     subfigure shows the result for \mmcr. The {\em 
		Exact} and {\em Greedy} entries indicate the exact and greedy search-based 
		solvers, respectively.}
    \label{fig:mcr-result}
\end{figure}

We observe that the exact IP method runs slower than the exact discrete 
search-based method for \mcr but significantly outperforms both 
search-based methods on \mmcr. We believe the reason is that with a 
single robot in a grid-based environment, the problem is not ``complex'' 
enough for IP-based method to utilize its structural advantages. Further 
evaluation shows that our IP method can solve a problem on a $50 \times 
50$ grid with $100$ robots and $100$ obstacles in around $5$ seconds, 
while the search-based solvers would fail to complete in such a case. 

As may be observed in Fig.~\ref{fig:mcr-result}, left, the computation time 
does not grow monotonically as the number of obstacles increases, even though 
the data is already averaged over $20$ instances. This is due to the 
NP-hardness of \mcr and there could be some random instances that are 
particularly hard to solve. However, note that the relative performance 
difference between methods remains consistent.


\vspace*{-4mm}
\section{Reward Collection Problems}\label{sec:rcp}
In this section, we examine some complex reward collection problems (RCPs) 
to further demonstrate the flexibility of our IP methodology. The initial work 
on \qcop \cite{yu2016correlated} and \otp \cite{yu2014optimal} used base-graph 
encoding. We show here these problems can also be solved using 
time-expanded-graph encoding to gain better computational performance. 

In \qcop, the initial and goal vertices can be freely chosen from fixed sets  
$X^I \subset V, X^G \subset V$. To handle this, we create a virtual vertex $u$ 
and add directed edges from $u$ to all the candidacy initial vertices in $V^0$, 
and from all the possible goal vertices in $V^0, \dots, V^T$ to $u$. 
Specifically, these directed edges are added to the time-expanded-graph: 
$\{(u, v^0) | v \in X^I\} \cup \{(v^t, u) | v \in X^G, 0 \leq t \leq T\}$. 
For a fixed time horizon $T$, constraint (\ref{constraint-feedback}) is now 
expressed as 
\begin{align}\label{constraint-feedback-qcop} 
\sum_{v_i \in N^-(u)} x_{u, v^i}^0 = \sum_{t = 0}^T \sum_{v_i \in N^+(u)} x_{v_i, u}^t = 1. 
\end{align}
Using a binary variable $x_{v_i}$ to indicate whether $v_i \in V$ is in the 
path, the constraint to assign a correct value to $x_{v_i}$ is
\begin{align}\label{constraint-qcop-v} 
x_{v_i} \leq \sum_{t = 0}^T \sum_{v_j \in N^+(v_i^t)} x_{v_j, v_i}^t. 
\end{align}
With binary variables indicating whether vertices and edges are used, the 
objective value (\ref{qcop-r}) (to be maximized) is directly encoded into 
the IP model. The time consumption requirement (\ref{qcop-t}) is added as 
a constraint.

The time-expanded model for \otp is similar to \qcop with one extra set of 
constraints $x_{u, v^i}^0 = \sum_{t = 0}^T x_{v_i, u}^t, \forall v_i \in X^I$
to ensure $x^I = x^G$, and an additional set of non-integer non-negative 
variables $t_i$ to indicate how long the robot stays at $v_i$. The constraint 
$t_i \leq L \ x_{v_i}$ ensures that $t_i$ is a positive value only if $v_i$ 
is in the path where $L$ is a large constant. 

We use variable size grid settings similar to those from 
\cite{yu2016correlated,yu2014optimal} for evaluation. We let $X^I$ contain $2$ 
randomly selected vertices. In \qcop, $x^G = V$; a random reward weight $r$ is 
assigned to each vertex. In \otp, we assume $R_i(t_i)$ are linear functions with 
random positive coefficients. We observe from the result 
(Fig.~\ref{fig:rcp-result}) that the time-expanded-graph encoding is always 
competitive and performs significantly better as the size of the problem gets 
larger. 

\begin{figure}[ht!]
\centering
\includegraphics[keepaspectratio]{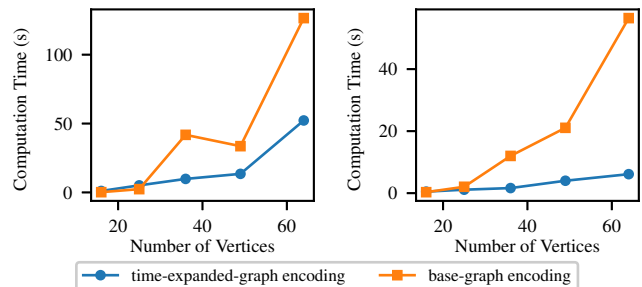}
\vspace{-7mm}
\caption{Computation time of the two path encoding methods versus the number of vertices in $G$: 
(left) \qcop result, the fluctuation for base-graph encoding is due to its sensitivity to grid aspect ratio; 
(right) \otp result.
}
\label{fig:rcp-result}
\end{figure}

\section{Discussion and Conclusion}\label{sec:conclusion}
In this work, building on previous efforts, we propose a two-step 
integer programming (IP) methodology for solving path-based optimization 
problems. The approach is applicable to a variety of computationally 
hard problems in robotics involving filtering through a huge set of 
candidate paths. Although simple to use, harnessing the power of heavily 
optimized solvers, the IP method comes with performance that is often 
competitive or even beats conventional methods. We point out two major 
strengths that come with the IP solution method: {\em (i)} due to its 
ease of use, the time that is required for developing a solution can be 
greatly reduced, and {\em (ii)} the method can provide reference optimal 
solutions to help speed up the design of conventional algorithms. With 
the study, which provides principles and many best practices for IP 
model construction for path-based optimization, we hope to promote the 
adoption of the method as a first choice when practitioners in robotics 
attack a new problem.

{\small
\bibliographystyle{IEEEtran}
\bibliography{all}
}

\newpage
\section*{Appendix: \\ Detailed IP Model Construction Practice}\label{section:appendix}

In this section, we provide implementation details of our IP formulations 
using some simple problem examples. To better show the IP structures, we 
\begin{itemize}
    \item do not fully apply the reachability test, and 
    \item do not remove redundant variables and constraints.
\end{itemize}

\subsection{\mpp with Partial Solution}

\begin{figure}[htp]
    \centering
    \vspace*{3mm}
    \begin{overpic}[scale = 0.75]{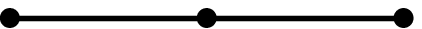}
        \put(5,  -8){\footnotesize $v_0$}
        \put(52, -8){\footnotesize $v_1$}
        \put(99,-8){\footnotesize $v_2$}
    \end{overpic}
    \vspace{2mm}
    \caption{A simple MPP instance with three verices $\{v_0, v_1, v_2\}$.}
    \label{fig:app-mpp}
\end{figure}

Consider $G$ as shown in Fig~\ref{fig:app-mpp}, with problem parameters 
$n = 2, k = 1, x^I_1 = v_0, x^G_1 = v_1, x^I_2 = v_1, x^G_2 = v_2$. 
Using a time-expanded encoding method, the binary variables in our IP formulation for $T = 1$ are: 
\[x^0_{1, v_0, v_0}, x^0_{1, v_1, v_0}, x^0_{2, v_0, v_1}, x^0_{2, v_1, v_1}, x^0_{2, v_2, v_1},\]
\[x^1_{1, v_0, v_0}, x^1_{1, v_0, v_1}, x^1_{2, v_1, v_0}, x^1_{2, v_1, v_1}, x^1_{2, v_1, v_2},\]
where the first $5$ variables denote the usage of feedback edges, 
and the last $5$ variables denote the usage of time-expanded edges generated from $G$. 

The objective function is:
\[\text{maximize} \quad x^0_{1, v_1, v_0} + x^0_{2, v_1, v_0}.\]

The constraints in the IP formulation are:

\noindent By constraint (\ref{constraint-feedback}),
\[x^0_{1, v_1, v_0} = x^1_{1, v_0, v_1}  = 1, x^0_{2, v_2, v_1} = x^1_{2, v_1, v_2} = 1.\]
By constraint (\ref{constraint-flow}),
\[x^1_{1, v_0, v_0} + x^1_{1, v_0, v_1} = x^0_{1, v_0, v_0} + x^0_{1, v_1, v_0},\] 
\[x^1_{2, v_1, v_2} + x^1_{2, v_1, v_1} + x^1_{2, v_1, v_0} = x^0_{2, v_2, v_1} + x^0_{2, v_1, v_1} + x^0_{2, v_0, v_1}.\] 
By constraint (\ref{constraint-mpp-collision-v}),
\[x^0_{1, v_0, v_0} + x^0_{1, v_1, v_0} \leq 1, \]
\[x^0_{2, v_0, v_1} + x^0_{2, v_1, v_1} + x^0_{2, v_2, v_1} \leq 1, \]
\[x^1_{1, v_0, v_0} + x^1_{2, v_1, v_0} \leq 1, \]
\[x^1_{1, v_0, v_1} + x^1_{2, v_1, v_1} \leq 1, \]
\[x^1_{2, v_1, v_2} \leq 1. \]
By constraint (\ref{constraint-mpp-collision-e}):
\[x^0_{1, v_0, v_0} \leq 1, \]
\[x^0_{2, v_0, v_1} + x^0_{1, v_1, v_0} \leq 1, \]
\[x^0_{1, v_1, v_1} \leq 1, \]
\[x^0_{2, v_2, v_1} \leq 1, \]
\[x^1_{1, v_0, v_0} \leq 1, \]
\[x^1_{1, v_0, v_1} + x^1_{2, v_1, v_0} \leq 1, \]
\[x^1_{2, v_1, v_1} \leq 1. \]
\[x^1_{2, v_1, v_2} \leq 1. \]
Note that for a partial solution, the objective function does not need to 
be optimized; we can terminate the IP solution process when
\[x^0_{1, v_1, v_0} + x^0_{2, v_1, v_0} \geq k = 1. \]

\subsection{\mpp Heuristics}
Continuing with the previous example, when we use tubular neighborhood with 
$h_t = 0$, variables $x^0_{2, v_0, v_1}$ and $x^1_{2, v_1, v_0}$ will both 
be set to $0$ since the shortest path for robot $2$ does not include $v_0$. 
When rechability spheres with $h_s = 0$ is used, $4$ more variables 
$x^0_{1, v_0, v_0}, x^1_{1, v_0, v_0}, x^0_{2, v_1, v_1}, x^1_{2, v_1, v_1}$ 
will be set to $0$. For this specific problem instance, these modifications do not 
affect the solution feasibility and optimality. 

Note that although a good reachability test will remove all the $6$ variables mentioned 
above, on a larger problem instance the variables affected by each heuristic are 
generally not subsets of each other.

\subsection{\mmcr}

\begin{figure}[htp]
    \centering
    \vspace*{1mm}
    \begin{overpic}[scale = 0.75]{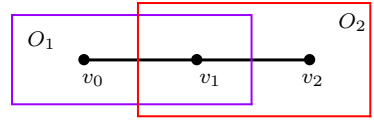}
        \put(20, 10){\footnotesize $v_0$}
        \put(52, 10){\footnotesize $v_1$}
        \put(80, 10){\footnotesize $v_2$}
        \put(5, 20){\footnotesize $O_1$}
        \put(90, 25){\footnotesize $O_2$}
    \end{overpic}
    \vspace{2mm}
    \caption{A simple (M)MCR instance with three verices $\{v_0, v_1, v_2\}$ 
    and two obstacles $O_1 = \{v_0, v_1\}$, $O_2 = \{v_1, v_2\}$.}
    \label{fig:app-mcr}
\end{figure}

As visualized in Fig~\ref{fig:app-mpp}, we add two obstacles into the previous \mpp 
example to make it a \mmcr example. With the same problem parameters 
$n = 2, x^I_1 = v_0, x^G_1 = v_1, x^I_2 = v_1, x^G_2 = v_2$. 
Using the base graph formulation, the binary variables are:
\[x_{1, v_0, v_1}, x_{1, v_1, v_0}, x_{1, v_1, v_2}, x_{1, v_2, v_1}, \]
\[x_{2, v_0, v_1}, x_{2, v_1, v_0}, x_{2, v_1, v_2}, x_{2, v_2, v_1}, \]
\[x_{v_0}, x_{v_1}, x_{v_2}, x_{O_1}, x_{O_2}.\]
The objective function is to minimize the number of obstacles that 
intersect with paths:
\[\text{minimize} \quad x_{O_1} + x_{O_2}.\]
By constraint (\ref{constraint:ig}), 
\[x_{1, v_0, v_1} = x_{1, v_0, v_1} + x_{1, v_2, v_1} = 1, \]
\[x_{2, v_1, v_0} + x_{2, v_1, v_2} = x_{2, v_1, v_2} = 1. \]
By constraint (\ref{constraint:ig2}), 
\[x_{1, v_1, v_0} = x_{1, v_1, v_0} + x_{1, v_1, v_2} = 0, \]
\[x_{2, v_0, v_1} + x_{2, v_2, v_1} = x_{2, v_2, v_1} = 0. \]
By constraint (\ref{constraint:in-out}), 
\[x_{1, v_2, v_1} = x_{1, v_1, v_2} \leq 1, \]
\[x_{2, v_0, v_1} = x_{2, v_1, v_0} \leq 1. \]

Note that constraint (\ref{constraint:subtour}) is not necessary in \mmcr since 
additional subtours cannot decrease the value of the objective function. 

\noindent By constraint (\ref{constraint:mcr-v}), 
\[L x_{v_0} \geq x_{1, v_0, v_1} + x_{1, v_1, v_0} + x_{2, v_0, v_1} + x_{2, v_1, v_0}, \]
\begin{align*}
    L x_{v_1} \geq \, & x_{1, v_1, v_0} + x_{1, v_0, v_1} + x_{1, v_1, v_2} + x_{1, v_2, v_1} + \\
    & x_{2, v_1, v_0} + x_{2, v_0, v_1} + x_{2, v_1, v_2} + x_{2, v_2, v_1}, 
\end{align*}
\[L x_{v_2} \geq x_{1, v_2, v_1} + x_{1, v_1, v_2} + x_{2, v_2, v_1} + x_{2, v_1, v_2}. \]
By constraint (\ref{constraint:mcr-o}), 
\[L x_{O_1} \geq x_{v_0} + x_{v_1}, \]
\[L x_{O_2} \geq x_{v_1} + x_{v_2}. \]

\subsection{\rcp}

For this section, we only demonstrate an implementation example for \otp since the formulation can be 
roughly considered as \qcop with extra variables and constraints. 
We continue to use the graph in Fig.~\ref{fig:app-mpp}. 
Consider the three vertices are associated with three linear reward functions 
$R_0, R_1, R_2$; the robot starts from vertex $v_0$; the cost of edge $(v_0, v_1)$ 
and $(v_1, v_2)$ are $c_{01}$ and $c_{12}$; the maximum cost is $c^*$; 
$X^I = X^G = \{v_0, v_1\}$. 
Using a time-expanded encoding method and with the additional virtual vertex $u$, 
the binary variables in our IP formulation for $T = 2$ are: 
\[x^0_{u, v_0}, x^0_{u, v_1}, x^2_{v_0, u}, x^2_{v_1, u}, \]
\[x^1_{v_0, v_0}, x^1_{v_0, v_1}, x^1_{v_1, v_0}, x^1_{v_1, v_1}, x^1_{v_1, v_2}, 
x_{v_0}, x_{v_1}, x_{v_2}. \]
The three additional continuous variables are:
\[t_0 \geq 0, t_1 \geq 0, t_2 \geq 0.\]
The objective function is:
\[\text{maximize} \quad R_0(t_0) + R_1(t_1) + R_2(t_2). \]
By constraint (\ref{otp-t}), 
\[c_{01} * (x^1_{v_0, v_1} + x^1_{v_1, v_0}) + c_{12} * x^1_{v_1, v_2} + t_0 + t_1 + t_2 \leq c^*. \]
By constraint (\ref{constraint-feedback-qcop}) (instead of (\ref{constraint-feedback})), 
\[x^0_{u, v_0} + x^0_{u, v_1} = x^2_{v_0, u} + x^2_{v_1, u} = 1, \]
By constraint (\ref{constraint-flow}), 
\[x^1_{v_0, v_0} + x^1_{v_0, v_1} = x^0_{u, v_0}, \]
\[x^1_{v_1, v_0} + x^1_{v_1, v_1} + x^1_{v_1, v_2} = x^0_{u, v_1}, \]
\[x^2_{v_0, u} = x^1_{v_0, v_0} + x^1_{v_1, v_0}, \]
\[x^2_{v_1, u} = x^1_{v_0, v_1} + x^1_{v_1, v_1}. \]
By constraint (\ref{constraint-qcop-v}), 
\[x_{v_0} \leq x^0_{u, v_0} + x^1_{v_0, v_0} + x^1_{v_1, v_0}, \]
\[x_{v_1} \leq x^0_{u, v_1} + x^1_{v_0, v_1} + x^1_{v_1, v_1}, \]
\[x_{v_2} \leq x^1_{v_1, v_2}.\]
Additional constraints for \otp are:
\[x^0_{u, v_0} = x^2_{v_0, u}, x^0_{u, v_1} = x^2_{v_1, u}\]
\[t_0 \leq L x_{v_0}, t_1 \leq L x_{v_1}, t_2 \leq L x_{v_2}. \]

\end{document}